\documentclass{llncs}
\usepackage{amsmath}
\usepackage{times}
\usepackage{indentfirst}
\usepackage{graphicx}

\newtheorem{faulse assertion}{faulse assertion}
\newtheorem{counter-example}{counter-example}

\begin{document}

\title{Condition for neighborhoods induced by a covering to be equal to the covering itself}         

\author{Hua Yao , William Zhu\thanks{Corresponding author.
E-mail: williamfengzhu@gmail.com(William Zhu)} }
\institute{Lab of Granular Computing,\\
Zhangzhou Normal University, Zhangzhou, China}



\date{\today}          
\maketitle

\begin{abstract}
It is a meaningful issue that under what condition neighborhoods induced by a covering are equal to the covering itself. A necessary and sufficient condition for this issue has been provided by some scholars. In this paper, through a counter-example, we firstly point out the necessary and sufficient condition is false. Second, we present a necessary and sufficient condition for this issue. Third, we concentrate on the inverse issue of computing neighborhoods by a covering, namely giving an arbitrary covering, whether or not there exists another covering such that the neighborhoods induced by it is just the former covering. We present a necessary and sufficient condition for this issue as well. In a word, through the study on the two fundamental issues induced by neighborhoods, we have gained a deeper understanding of the relationship between neighborhoods and the covering which induce the neighborhoods.
\newline
\textbf{Keywords.} Neighborhood; Reducible element; Repeat degree; Core block; Invariable covering.
\end{abstract}

\section{Introduction}
Rough set theory, proposed by Pawlak~\cite{Pawlak82Rough,Pawlak91Rough}, is an extension of set theory for the
study of intelligent systems characterized by insufficient and incomplete information. In theory, rough sets have been connected with matroids~\cite{TangSheZhu12matroidal,WangZhuZhuMin12matroidalstructure}, lattices~\cite{Dai05Logic,EstajiHooshmandaslDavvaz12Roughappliedtolattice,Liu08Generalized,WangZhu11Quantitative}, hyperstructure theory~\cite{YamakKazanciDavvaz11Softhyperstructure},
topology~\cite{Kondo05OnTheStructure,LashinKozaeKhadraMedhat05Rough,Zhu07Topological}, fuzzy sets~\cite{KazanciYamakDavvaz08TheLower,WuLeungMi05OnCharacterizations}, and so on. Rough set theory is built on an equivalence relation, or to say, on a partition. But equivalence relation or partition is still restrictive for many applications. To address
this issue, several meaningful extensions to equivalence relation
have been proposed. Among them, Zakowski has used coverings
of a universe for establishing the covering based rough set theory~\cite{Zakowski83Approximations}. Many scholars have done deep researches on this theory~\cite{BonikowskiBryniarskiWybraniecSkardowska98Extensions,Bryniarski89ACalculus,ZhuWang03Reduction}, and some basic results have been presented.

Neighborhood is an important concept in covering based rough set theory. Many scholars have studied it from different perspectives. Lin augmented the relational database with neighborhood~\cite{Lin88Neighborhoodsystems}. Yao presented a framework for the formulation, interpretation, and comparison of neighborhood systems and rough set approximations~\cite{Yao98Relational}. By means of consistent function based on the concept of neighborhood, Wang et al.~\cite{WangChenSunHu12Communication} dealt with information systems through covering based rough sets. Furthermore, the concept of neighborhood itself has produced lots of meaningful issues as well, and it is one of them that under what condition neighborhoods induced by a covering are equal to the covering itself. In paper~\cite{WangChenSunHu12Communication}, Wang et al. provided a necessary and sufficient condition about this issue.

In this paper, through a counter-example, we firstly point out that the necessary and sufficient condition provided by Wang et al. is false. Second, we propose the concepts of repeat degree and core block, and then study some properties of them. Third, we propose the concept of invariable covering based on core block. And by means of invariable covering, we present a necessary and sufficient condition for neighborhoods induced by a covering to be equal to the covering itself. Fourth, we concentrate on the inverse issue of computing neighborhoods by a covering, namely giving an arbitrary covering, whether or not there exists another covering such that the neighborhoods induced by it is just the former covering. By means of a property of neighborhoods obtained by Liu et al.~\cite{LiuSai09AComparison} and us independently, we present a necessary and sufficient condition for covering to be a neighborhoods induced by another covering.

The remainder of this paper is organized as follows. In Section~\ref{S:Preliminaries}, we review the relevant concepts and point out that the necessary and sufficient condition provided by Wang et al. is false. In Section~\ref{S:Some new concepts and their properties}, we propose the concepts of repeat degree and core block, and then study some properties of them. In Section~\ref{S:Condition for neighborhoods induced by a covering to
be equal to the covering itself}, we present a necessary and sufficient condition for neighborhoods induced by a covering to be equal to the covering itself. In Section~\ref{S:Condition for covering to be a neighborhoods}, we present a necessary and sufficient condition for covering to be a neighborhoods induced by another covering. Section~\ref{S:Conclusions} presents conclusions.

\section{Preliminaries}
\label{S:Preliminaries}
The concepts of partition and covering are the basis of classical rough sets and covering based rough sets, respectively. And covering is the basis of the concept of neighborhood as well. So we introduce the two concepts at first.

\begin{definition}(Partition)
\label{D:Partition}
Let $U$ be a universe of discourse and $\mathbf{P}$ a family of subsets of $U$. If $\emptyset\notin\mathbf{P}$, and $\cup \mathbf{P}=U$, and for any $K,L\in\mathbf{P}$, $K\cap L=\emptyset$, then $\mathbf{P}$ is called a partition of $U$. Every element of $\mathbf{P}$ is called a partition block.
\end{definition}

In the following discussion, unless stated to the contrary, the universe of discourse $U$ is considered to be
finite and nonempty.

\begin{definition}(Covering)
\label{D:Covering}
Let $U$ be a universe and $\mathbf{C}$ a family of subsets of $U$. If $\emptyset\notin\mathbf{C}$, and $\cup \mathbf{C}=U$, then $\mathbf{C}$ is called a covering of $U$. Every element of $\mathbf{C}$ is called a covering block.
\end{definition}

It is clear that a partition of $U$ is certainly a covering of $U$, so the concept of covering is an extension of the concept of partition.
In the following, we introduce the concepts of neighborhood and neighborhoods, two main concepts which will be discussed in this paper.

\begin{definition}(Neighborhood~\cite{Lin88Neighborhoodsystems})
\label{D:Neighborhood}
Let $\mathbf{C}$ be a covering of $U$. For any $x\in U$, $N(x)=\cap\{K\in\mathbf{C}|x\in K\}$ is called the neighborhood of $x$.
\end{definition}

A relationship between two different neighborhoods is presented by the following proposition.

\begin{proposition}~\cite{WangChenSunHu12Communication}
\label{P:1}
Let $\mathbf{C}$ be a covering of $U$. For any $x,y\in U$, if $y\in N(x)$, then $N(y)\subseteq N(x)$. So if $y\in N(x)$ and $x\in N(y)$, then $N(x)=N(y)$.
\end{proposition}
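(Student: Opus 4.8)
The plan is to unwind the definition of neighborhood and reduce the statement to an elementary monotonicity property of intersections. Recall that $N(x)=\cap\{K\in\mathbf{C}\mid x\in K\}$, so $N(x)$ is the intersection of the subfamily $\mathcal{F}_x=\{K\in\mathbf{C}\mid x\in K\}$ of all covering blocks containing $x$. The key observation I would establish first is the inclusion of index families: if $y\in N(x)$, then $y$ lies in every block $K$ that contains $x$, hence every member of $\mathcal{F}_x$ is also a member of $\mathcal{F}_y=\{K\in\mathbf{C}\mid y\in K\}$; that is, $\mathcal{F}_x\subseteq\mathcal{F}_y$.

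From this the first claim is immediate by the antitone behaviour of intersection: since $\mathcal{F}_x\subseteq\mathcal{F}_y$, intersecting over the larger family yields a smaller set, so
\[
N(y)=\cap\mathcal{F}_y\subseteq\cap\mathcal{F}_x=N(x).
\]
(One should note $\mathcal{F}_x$ is nonempty because $\cup\mathbf{C}=U$ guarantees at least one block contains $x$, so $N(x)$ is well defined; the same holds for $y$.)

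For the second claim I would simply apply the first claim twice. Assuming $y\in N(x)$ gives $N(y)\subseteq N(x)$, and assuming $x\in N(y)$ gives, symmetrically, $N(x)\subseteq N(y)$. Combining the two inclusions yields $N(x)=N(y)$.

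I do not anticipate a genuine obstacle here: the whole argument is the observation that ``$y\in N(x)$'' is exactly the statement that the block family of $x$ is contained in the block family of $y$, together with the trivial fact that intersection reverses inclusions. The only point worth stating carefully is the nonemptiness of the relevant block families, which is exactly what the covering condition $\cup\mathbf{C}=U$ provides.
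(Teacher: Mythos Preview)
Your proof is correct: the observation that $y\in N(x)$ is equivalent to $\mathcal{F}_x\subseteq\mathcal{F}_y$, together with the antitonicity of intersection, is exactly what is needed, and the second claim is then immediate by symmetry. Note that the paper does not supply its own proof of this proposition---it is quoted as a preliminary result from Wang et al.---so there is no in-paper argument to compare your approach against; your write-up is a clean and standard justification of the cited fact.
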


After the concept of neighborhood has been given, we can introduce the concept of neighborhoods.

\begin{definition}~\cite{WangChenSunHu12Communication}
\label{D:2}
Let $\mathbf{C}$ be a covering of $U$. $Cov(\mathbf{C})=\{N(x)|x\in U\}$ is called the neighborhoods induced by $\mathbf{C}$.
\end{definition}

There is an important property of neighborhoods presented by the following proposition.

\begin{proposition}~\cite{WangChenSunHu12Communication}
\label{P:0}
For any $N(x)\in Cov(\mathcal{C})$, $N(x)$ is not a union of other blocks in $Cov(\mathcal{C})$.
\end{proposition}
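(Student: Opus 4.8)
The plan is to argue by contradiction, with Proposition~\ref{P:1} doing essentially all the work. Suppose some $N(x)\in Cov(\mathbf{C})$ were a union of other blocks of $Cov(\mathbf{C})$, i.e.\ $N(x)=\bigcup_{i\in I}N(y_i)$ where each $N(y_i)\in Cov(\mathbf{C})$ and $N(y_i)\neq N(x)$ (as a subset of $U$) for every $i\in I$.

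First I would note that $x\in N(x)$ always holds, since $x$ belongs to every block $K\in\mathbf{C}$ that is intersected to form $N(x)$; in particular $N(x)\neq\emptyset$ and the index set $I$ is nonempty. More precisely, because $x\in N(x)=\bigcup_{i\in I}N(y_i)$, there is some $j\in I$ with $x\in N(y_j)$.

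Next I would apply Proposition~\ref{P:1} to the pair $x,y_j$: from $x\in N(y_j)$ it follows that $N(x)\subseteq N(y_j)$. On the other hand, $N(y_j)$ is one of the sets whose union equals $N(x)$, so $N(y_j)\subseteq N(x)$. The two inclusions give $N(x)=N(y_j)$, contradicting the assumption that $N(y_j)$ is a block \emph{other than} $N(x)$. Hence no such decomposition of $N(x)$ exists.

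I do not expect a genuine obstacle here; the proof is short. The only points requiring a little care are (i) the step that pins $x$ inside some member of the union, which relies on $x\in N(x)$ (equivalently, on $N(x)$ being nonempty), and (ii) making precise that "other blocks'' means blocks distinct from $N(x)$ as subsets of $U$, so that the derived equality $N(x)=N(y_j)$ is indeed a contradiction.
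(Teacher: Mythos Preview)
Your argument is correct: picking $j$ with $x\in N(y_j)$ and combining Proposition~\ref{P:1} (which gives $N(x)\subseteq N(y_j)$) with the trivial inclusion $N(y_j)\subseteq N(x)$ yields the contradiction $N(x)=N(y_j)$. Note, however, that the present paper does not supply its own proof of Proposition~\ref{P:0}; the statement is quoted as a preliminary result from \cite{WangChenSunHu12Communication}, so there is nothing in this paper to compare against. Your proof is the natural one and almost certainly coincides with the original.
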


By the definition of $Cov(\mathbf{C})$, we see that $Cov(\mathbf{C})$ is still a covering of universe $U$. In particular, if $\mathbf{C}$ is a partition, we have that $Cov(\mathbf{C})=\mathbf{C}$. In paper~\cite{WangChenSunHu12Communication}, Wang et al. said that $Cov(\mathbf{C})=\mathbf{C}$ if and only if $\mathbf{C}$ was a partition. The following counter-example indicates that the necessity of this proposition is false.

\begin{example}
\label{E:3}
Let $U=\{1,2,3\}$, $\mathbf{C}=\{K_{1},K_{2},K_{3}\}$, where $K_{1}=\{1\}$, $K_{2}=\{1,2\}$, $K_{3}=\{3\}$. We have that $N(1)=\{1\}=K_{1}$, $N(2)=\{1,2\}=K_{2}$, $N(3)=\{3\}=K_{3}$, thus $Cov(\mathbf{C})=\{N(1),N(2),N(3)\}=\{K_{1},K_{2},K_{3}\}=\mathbf{C}$. But $\mathbf{C}=\{K_{1},K_{2},K_{3}\}=\{\{1\},\{1,2\},\{3\}\}$ is not a partition.
\end{example}

In the following sections, we firstly propose some new concepts, and then study on their properties. By means of them, we present a necessary and
sufficient condition for neighborhoods induced by a covering to be equal to the covering itself.

\section{Repeat degree and core block}
\label{S:Some new concepts and their properties}
There is a difference between a partition and a covering of a same universe $U$. The difference is embodied in that for any $x\in U$, there exists only one partition block which include $x$ but there might exist more than one covering block which include $x$. Then it is necessary to concern with how many blocks including $x$ there are in a covering. Inspired by this, we propose the following concept.

\begin{definition}(Membership repeat degree)
\label{D:Membership repeat degree}
Let $\mathbf{C}$ be a covering of a universe $U$. We define a function $\partial_{\mathbf{C}}:U\rightarrow N^{+}$, $\partial_{\mathbf{C}}(x)=|\{K\in\mathbf{C}|x\in K\}|$, and call $\partial_{\mathbf{C}}(x)$ the membership repeat degree of $x$ with respect to covering $\mathbf{C}$. When the covering is clear, we omit the lowercase $\mathbf{C}$ for the function.
\end{definition}

That an element $x$ of $U$ has the membership repeat degree of $\partial(x)$ means that there are $\partial(x)$  blocks in covering  $\mathbf{C}$ which include element $x$.
To illustrate the above definition, let us see an example.

\begin{example}
\label{E:4}
Let $U=\{1,2,3\}$, $\mathbf{C}=\{K_{1},K_{2}\}$, where $K_{1}=\{1,2\}$, $K_{2}=\{2,3\}$. Then $\{K\in\mathbf{C}|1\in K\}=\{K_{1}\}$, $\{K\in\mathbf{C}|2\in K\}=\{K_{1},K_{2}\}$, $\{K\in\mathbf{C}|3\in K\}=\{K_{2}\}$, thus $\partial(1)=|\{K_{1}\}|=1$, $\partial(2)=|\{K_{1},K_{2}\}|=2$, $\partial(3)=|\{K_{2}\}|=1$.
\end{example}

In order to learn more about the neighborhoods, a special kind of covering, it is not enough using membership repeat degree of single element. We need research further that how many blocks including $x$ and $y$ simultaneously there are in a covering.

\begin{definition}(Common block repeat degree)
\label{D:Common block repeat degree}
Let $\mathbf{C}$ be a covering of a universe $U$. We define a function $\lambda_{\mathbf{C}}: U\times U\rightarrow N, \lambda_{\mathbf{C}}((x,y))=|\{K\in\mathbf{C}|\{x,y\}\subseteq K\}|$. We write $\lambda_{\mathbf{C}}((x,y))$ as $\lambda_{\mathbf{C}}(x,y)$ for short, and for any $x,y\in U$, we call $\lambda_{\mathbf{C}}(x,y)$ the common block repeat degree of binary group $(x,y)$ with respect to covering $\mathbf{C}$. When the covering is clear, we omit the lowercase $\mathbf{C}$ for the function.
\end{definition}

That a binary group $(x,y)$ of universe $U$ has the common block repeat degree of $\lambda(x,y)$ with respect to covering $\mathbf{C}$ means that there are $\lambda(x,y)$ blocks in covering $\mathbf{C}$ which include element $x$ and $y$ simultaneously.
To illustrate the above definition, let us see an example.

\begin{example}
\label{E:5}
Let $U=\{1,2,3,4\}$, $\mathbf{C}=\{K_{1},K_{2},K_{3}\}$, where $K_{1}=\{1,2\}$, $K_{2}=\{2,3,4\}$, $K_{3}=\{3,4\}$. Then $\lambda(1,2)=\lambda(2,3)=\lambda(2,4)=1$, $\lambda(1,3)=\lambda(1,4)=0$, $\lambda(3,4)=2$.
\end{example}

The common block repeat degree $\lambda(x,y)$ has some properties as follows.

\begin{proposition}
\label{P:6}
(1) $\lambda(x,y)=\lambda(y,x)$; (2) $\lambda(x,y)\leq min(\partial(x),\partial(y))$.
\end{proposition}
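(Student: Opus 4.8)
The plan is to prove the two claims directly from the definitions of $\lambda$ and $\partial$, since both reduce to elementary facts about finite sets.

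First I would handle part (1). By Definition~\ref{D:Common block repeat degree}, $\lambda(x,y) = |\{K\in\mathbf{C} \mid \{x,y\}\subseteq K\}|$ and $\lambda(y,x) = |\{K\in\mathbf{C} \mid \{y,x\}\subseteq K\}|$. Since $\{x,y\} = \{y,x\}$ as sets, the two defining conditions $\{x,y\}\subseteq K$ and $\{y,x\}\subseteq K$ are literally the same condition, so the two sets whose cardinalities we are counting are identical. Hence $\lambda(x,y) = \lambda(y,x)$. This step is immediate and requires no real work beyond citing the symmetry of the unordered pair.

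Next I would prove part (2). Fix $x,y\in U$ and set $A = \{K\in\mathbf{C} \mid x\in K\}$, $B = \{K\in\mathbf{C} \mid y\in K\}$, and $D = \{K\in\mathbf{C} \mid \{x,y\}\subseteq K\}$. The key observation is that $K\supseteq\{x,y\}$ holds precisely when $x\in K$ and $y\in K$, so $D = A\cap B$. Therefore $D\subseteq A$ and $D\subseteq B$, and since all these families are finite, monotonicity of cardinality gives $|D|\le |A|$ and $|D|\le |B|$, i.e. $\lambda(x,y) \le \partial(x)$ and $\lambda(x,y) \le \partial(y)$. Combining, $\lambda(x,y) \le \min(\partial(x),\partial(y))$.

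There is no genuine obstacle here; the only thing to be careful about is writing $D = A\cap B$ cleanly and invoking finiteness of $U$ (hence of $\mathbf{C}$, hence of all these subfamilies) so that the cardinality comparison is legitimate. I would present the argument in two short numbered paragraphs mirroring the two parts of the proposition, and perhaps note in passing that equality in (2) can fail, as Example~\ref{E:5} already illustrates with $\lambda(3,4)=2 = \partial(3) = \partial(4)$ versus $\lambda(1,2)=1 < 2 = \partial(2)$, though such a remark is optional.
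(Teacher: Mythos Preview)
Your proof is correct and follows exactly the approach the paper intends: the paper's own proof consists of the single line ``It follows easily from Definition~\ref{D:Membership repeat degree} and Definition~\ref{D:Common block repeat degree},'' and your argument simply spells out that easy verification in detail. There is nothing to change; if anything, your version is more informative than the paper's.
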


\begin{proof}
\label{P:7}
It follows easily from Definition~\ref{D:Membership repeat degree} and Definition~\ref{D:Common block repeat degree}.
\end{proof}

It can be expressed by repeat degree that the set of the covering blocks including $x$ is equal to the set of the covering blocks including $x$ and $y$ simultaneously.

\begin{proposition}
\label{P:8}
Let $\mathbf{C}$ be a covering of a universe $U$. For any $x,y\in U$,
$\{K\in\mathbf{C}|x\in K\}=\{K\in\mathbf{C}|\{x,y\}\subseteq K\}\Leftrightarrow\partial(x)=\lambda(x,y)$.
\end{proposition}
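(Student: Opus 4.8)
The plan is to prove the two implications of the biconditional separately, noting first the trivial set-theoretic containment $\{K\in\mathbf{C}\mid\{x,y\}\subseteq K\}\subseteq\{K\in\mathbf{C}\mid x\in K\}$, which holds for \emph{every} pair $x,y$ since any block containing both $x$ and $y$ certainly contains $x$. Write $A=\{K\in\mathbf{C}\mid x\in K\}$ and $B=\{K\in\mathbf{C}\mid\{x,y\}\subseteq K\}$, so that $B\subseteq A$, $|A|=\partial(x)$, and $|B|=\lambda(x,y)$.

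For the forward direction ($\Rightarrow$): if $A=B$ as sets, then in particular $|A|=|B|$, i.e. $\partial(x)=\lambda(x,y)$. This is immediate.

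For the converse ($\Leftarrow$): suppose $\partial(x)=\lambda(x,y)$, i.e. $|A|=|B|$. Since $B\subseteq A$ and both are finite sets (the universe $U$, hence $\mathbf{C}$, is finite by the standing assumption), equality of cardinalities forces $A=B$. This is the step where I would explicitly invoke finiteness of $\mathbf{C}$; it is the only place the hypothesis is used, and it is the ``main obstacle'' only in the sense that one must remember to state it — a proper subset of a finite set has strictly smaller cardinality, so $B\subsetneq A$ would contradict $|A|=|B|$.

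Combining the two directions gives the equivalence. I would also remark, for the reader's intuition, that $A=B$ says every covering block that contains $x$ also contains $y$, so this proposition characterizes that situation numerically via repeat degrees; this observation is the reason the result is useful in the sequel, where $N(x)$ is built precisely from the blocks containing $x$.
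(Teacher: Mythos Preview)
Your proof is correct and follows essentially the same route as the paper: both use the trivial inclusion $B\subseteq A$, dispose of $(\Rightarrow)$ immediately, and for $(\Leftarrow)$ invoke finiteness to conclude that a subset with the same cardinality must coincide with the whole set (the paper phrases this last step contrapositively as a contradiction, but the content is identical).
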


\begin{proof}

$(\Rightarrow)$: It is straightforward.\\
$(\Leftarrow)$: It is clear that $\{K\in\mathbf{C}|\{x,y\}\subseteq K\}\subseteq\{K\in\mathbf{C}|x\in K\}$. If $\{K\in\mathbf{C}|\{x,y\}\subseteq K\}\neq\{K\in\mathbf{C}|x\in K\}$, therefore  $\{K\in\mathbf{C}|\{x,y\}\subseteq K\}$ is the proper subset of $\{K\in\mathbf{C}|x\in K\}$. Taking into account the finiteness of set $\{K\in\mathbf{C}|x\in K\}$, we have that $|\{K\in\mathbf{C}|\{x,y\}\subseteq K\}|<|\{K\in\mathbf{C}|x\in K\}|$, thus $\lambda(x,y)<\partial(x)$. This is a contradiction to that $\partial(x)=\lambda(x,y)$.

This completes the proof.
\end{proof}

Based on the concepts of membership repeat degree and common block repeat degree, we propose the concept of core block. Core block is a special kind of covering block and is closely related to the issue that under what condition neighborhoods induced by a covering are equal to the covering itself.

\begin{definition}(Core block)
\label{D:Core block}
Let $\mathbf{C}$ be a covering of a universe $U$. For any $x\in U$ and any $K\in\mathbf{C}$, $K$ is called the core block of $x$ if and only if $x\in K$ and for any $y\in K$, $\lambda(x,y)=\partial(x)$. The core block of $x$ is denoted as $\Gamma(x)$.
\end{definition}

For any element of $U$, say $x$, if it has a core block, are there some other different covering blocks which are the core blocks of $x$ as well? The following proposition answer this issue.

\begin{proposition}
\label{P:9}
Let $\mathbf{C}$ be a covering of a universe $U$. For any $x\in U$, if $K_{1},K_{2}\in\mathbf{C}$ are both the core block of $x$, then $K_{1}=K_{2}$.
\end{proposition}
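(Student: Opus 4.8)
The plan is to show that if $K_1$ and $K_2$ are both core blocks of $x$, then each is contained in the other, whence they coincide. Fix $x \in U$ and suppose $K_1, K_2 \in \mathbf{C}$ are both core blocks of $x$. By the definition of core block, $x \in K_1$ and $x \in K_2$, and for any $y \in K_1$ we have $\lambda(x,y) = \partial(x)$, and likewise for any $z \in K_2$ we have $\lambda(x,z) = \partial(x)$.

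The key step is to translate the condition $\lambda(x,y) = \partial(x)$ into a statement about containment, using Proposition~\ref{P:8}: for $y \in K_1$, the equality $\partial(x) = \lambda(x,y)$ gives $\{K \in \mathbf{C} \mid x \in K\} = \{K \in \mathbf{C} \mid \{x,y\} \subseteq K\}$. In particular, since $x \in K_2$ (so $K_2$ belongs to the left-hand set), $K_2$ must belong to the right-hand set, i.e. $\{x,y\} \subseteq K_2$, hence $y \in K_2$. As $y \in K_1$ was arbitrary, this shows $K_1 \subseteq K_2$. By the symmetric argument, swapping the roles of $K_1$ and $K_2$ (and using that for any $z \in K_2$, $\lambda(x,z) = \partial(x)$, together with $x \in K_1$), we get $K_2 \subseteq K_1$. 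Therefore $K_1 = K_2$.

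I expect the main (minor) obstacle to be simply making the application of Proposition~\ref{P:8} clean: one must be careful to invoke it with the correct element in the role of ``$y$'' and to observe that the block one wants to capture ($K_2$, respectively $K_1$) does contain $x$ and hence lies in the set $\{K \in \mathbf{C} \mid x \in K\}$, so that the equality of the two families forces it to contain the second point as well. Everything else is a routine double-inclusion argument, and no further machinery beyond Definition~\ref{D:Core block} and Proposition~\ref{P:8} is needed.
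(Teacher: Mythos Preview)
Your proof is correct and follows essentially the same approach as the paper: both argue by double inclusion, using Definition~\ref{D:Core block} to get $\lambda(x,y)=\partial(x)$ for $y\in K_1$, then invoking Proposition~\ref{P:8} to conclude that every block containing $x$ (in particular $K_2$) must also contain $y$, whence $K_1\subseteq K_2$, and then citing symmetry for the reverse inclusion.
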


\begin{proof}

By Definition~\ref{D:Core block}, we have that $x\in K_{1}$ and $x\in K_{2}$. For any $y\in K_{1}$, again, by Definition~\ref{D:Core block}, we have that $\partial(x)=\lambda(x,y)$. Then by Proposition~\ref{P:8}, we have that $\{K\in\mathbf{C}|x\in K\}=\{K\in\mathbf{C}|\{x,y\}\subseteq K\}$. As $x\in K_{2}$, thus $K_{2}\in\{K\in\mathbf{C}|x\in K\}$. So $K_{2}\in\{K\in\mathbf{C}|\{x,y\}\subseteq K\}$, then $\{x,y\}\subseteq K_{2}$, thus $y\in K_{2}$. Hence $K_{1}\subseteq K_{2}$. Similarly, $K_{2}\subseteq K_{1}$. Therefore $K_{1}=K_{2}$.

This completes the proof.
\end{proof}

This proposition indicates that the core block of any element of $U$ is unique.
It is possible that an element of a universe $U$ have no core block in a covering $\mathbf{C}$ of the universe $U$. To illustrate this, let us see an example.

\begin{example}
\label{E:12}
Let $U=\{1,2,3,4\}$, $\mathbf{C}=\{K_{1},K_{2},K_{3}\}$, where $K_{1}=\{1,2\}$, $K_{2}=\{1,2,3\}$, $K_{3}=\{3,4\}$. By the definition of core block, we see that $K_{1}$ is the core block of 1 as well as 2, namely $K_{1}=\Gamma(1)=\Gamma(2)$, and $K_{3}$ is the core block of 4, namely $K_{3}=\Gamma(4)$, but 3 have no core block.
\end{example}

By this example, we can also see that a block of a covering might be the core block of some different elements of the universe simultaneously. The following proposition give a necessary and sufficient condition for a covering block to be a core block.

\begin{proposition}
\label{P:10}
Let $\mathbf{C}$ be a covering of a universe $U$. For any $x\in U$, $K\in\mathbf{C}$ is the core block of $x$ if and only if $K$ is the intersection of all the blocks of $\mathbf{C}$ that include $x$.
\end{proposition}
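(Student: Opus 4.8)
The statement is an ``if and only if'', so I would prove the two implications separately, and in each direction the engine is Proposition~\ref{P:8}, which converts the pointwise equality $\partial(x)=\lambda(x,y)$ into the set equality $\{K\in\mathbf{C}\mid x\in K\}=\{K\in\mathbf{C}\mid\{x,y\}\subseteq K\}$. Throughout, write $\mathcal{K}_x=\{K\in\mathbf{C}\mid x\in K\}$ for the family of blocks containing $x$, so that $N(x)=\cap\mathcal{K}_x$ is precisely ``the intersection of all the blocks of $\mathbf{C}$ that include $x$''. The goal is thus to show: $K$ is the core block of $x$ iff $K=\cap\mathcal{K}_x$.

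For the forward direction, assume $K$ is the core block of $x$. By Definition~\ref{D:Core block}, $x\in K$ and $\lambda(x,y)=\partial(x)$ for every $y\in K$; applying Proposition~\ref{P:8} to each such $y$ gives $\mathcal{K}_x=\{L\in\mathbf{C}\mid\{x,y\}\subseteq L\}$, which says every block containing $x$ also contains $y$. Hence $y\in\cap\mathcal{K}_x$ for every $y\in K$, i.e.\ $K\subseteq\cap\mathcal{K}_x$. The reverse inclusion $\cap\mathcal{K}_x\subseteq K$ is immediate because $x\in K$ makes $K$ one of the sets being intersected. Therefore $K=\cap\mathcal{K}_x=N(x)$.

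For the converse, assume $K=\cap\mathcal{K}_x$. Since $x$ belongs to every block of $\mathcal{K}_x$ (and $\mathcal{K}_x$ is nonempty as $\cup\mathbf{C}=U$), we get $x\in K$, and of course $K\in\mathbf{C}$ only needs to be checked here as a hypothesis of the proposition rather than derived. Now fix any $y\in K$. Then $y\in L$ for every $L\in\mathcal{K}_x$, so every block containing $x$ also contains $\{x,y\}$, giving $\mathcal{K}_x\subseteq\{L\in\mathbf{C}\mid\{x,y\}\subseteq L\}$; the opposite inclusion is trivial, so the two families coincide and by Proposition~\ref{P:8} (the easy $(\Rightarrow)$ direction of it, read backwards) we obtain $\partial(x)=\lambda(x,y)$. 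As $y\in K$ was arbitrary, Definition~\ref{D:Core block} is satisfied and $K$ is the core block of $x$.

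I do not expect a genuine obstacle here: the proof is essentially a repackaging of Proposition~\ref{P:8} applied uniformly over all $y\in K$, combined with the trivial observation that $\cap\mathcal{K}_x\subseteq K$ whenever $K\in\mathcal{K}_x$. The one point that deserves a careful sentence is that the hypothesis ``$K\in\mathbf{C}$'' is used in the converse direction to know that $K$ is a genuine block (so that it makes sense to ask whether it is a core block), whereas the intersection $\cap\mathcal{K}_x$ need not itself lie in $\mathbf{C}$ in general; the proposition is asserting the equivalence precisely for those $x$ whose neighborhood $N(x)$ happens to be a member of the covering.
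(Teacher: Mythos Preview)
Your proof is correct and follows essentially the same approach as the paper: both arguments hinge on Proposition~\ref{P:8} to translate between $\partial(x)=\lambda(x,y)$ and the equality of the families $\{L\in\mathbf{C}\mid x\in L\}$ and $\{L\in\mathbf{C}\mid\{x,y\}\subseteq L\}$, together with the trivial inclusion $\cap\mathcal{K}_x\subseteq K$ obtained from $x\in K$. The only difference is presentational: the paper packages everything into a single chain of logical biconditionals, whereas you split the argument into two prose implications; the mathematical content is identical.
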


\begin{proof}
Let $M=\{L\in\mathbf{C}|x\in L\}$. By $K\in\mathbf{C}$ and Proposition~\ref{P:8}, we have that\\
$K=\cap M\\
\Leftrightarrow(\cap M\subseteq K)\wedge(K\subseteq\cap M)\\
\Leftrightarrow(\cap M\subseteq K\wedge x\in K)\wedge(K\subseteq\cap M)\\
\Leftrightarrow(\cap M\subseteq K\wedge x\in K)\wedge(x\in K\wedge K\subseteq\cap M)\\
\Leftrightarrow(\cap M\subseteq K\wedge x\in K)\wedge((x\in K)\wedge\forall y((y\in K)\rightarrow(y\in\cap M)))\\
\Leftrightarrow(\cap M\subseteq K\wedge x\in K)\wedge((x\in K)\wedge\forall y((y\in K)\rightarrow\forall L((L\in\mathbf{C}\wedge x\in L)\rightarrow(y\in L))))\\
\Leftrightarrow(\cap M\subseteq K\wedge x\in K)\wedge((x\in K)\wedge\forall y((y\in K)\rightarrow\forall L((L\in\mathbf{C}\wedge x\in L)\rightarrow(\{x,y\}\subseteq L)))\\
\Leftrightarrow(\cap M\subseteq K\wedge x\in K)\wedge((x\in K)\wedge\forall y((y\in K)\rightarrow\forall L((L\in\mathbf{C}\wedge x\in L)\rightarrow(L\in\mathbf{C}\wedge\{x,y\}\subseteq L)))\\
\Leftrightarrow(\cap M\subseteq K\wedge x\in K)\wedge((x\in K)\wedge\forall y((y\in K)\rightarrow\forall L((L\in\mathbf{C}\wedge x\in L)\leftrightarrow(L\in\mathbf{C}\wedge\{x,y\}\subseteq L)))\\
\Leftrightarrow(\cap M\subseteq K\wedge x\in K)\wedge((x\in K)\wedge\forall y((y\in K)\rightarrow(\forall L(L\in\mathbf{C}\wedge x\in L)\leftrightarrow\forall L(L\in\mathbf{C}\wedge\{x,y\}\subseteq L)))\\
\Leftrightarrow(\cap M\subseteq K\wedge x\in K)\wedge((x\in K)\wedge\forall y((y\in K)\rightarrow(\{L\in\mathbf{C}|x\in L\}=\{L\in\mathbf{C}|\{x,y\}\subseteq L\}))\\
\Leftrightarrow(\cap M\subseteq K\wedge x\in K)\wedge((x\in K)\wedge\forall y((y\in K)\rightarrow(\partial(x)=\lambda(x,y)))\\
\Leftrightarrow(\cap M\subseteq K\wedge x\in K)\wedge(K=\Gamma(x))\\
\Leftrightarrow(x\in K)\wedge(K=\Gamma(x))\\
\Leftrightarrow(K=\Gamma(x))$.

This completes the proof.
\end{proof}

By Proposition~\ref{P:10}, we obtain the following corollary.

\begin{corollary}
\label{C:11}
Let $\mathbf{C}$ be a covering of a universe $U$. For any $x\in U$, if there exists the core block of $x$, then for any $K\in\mathbf{C}\wedge x\in K$, that $\Gamma(x)\subseteq K$ holds.
\end{corollary}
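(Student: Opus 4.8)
The plan is to deduce this corollary directly from Proposition~\ref{P:10}, which tells us that if the core block of $x$ exists, then $\Gamma(x)$ equals the intersection of all blocks of $\mathbf{C}$ containing $x$. The statement to prove is almost immediate from this characterization: any single block $K\in\mathbf{C}$ with $x\in K$ is one of the sets participating in that intersection, so the intersection is contained in $K$.

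First I would set $M=\{L\in\mathbf{C}\mid x\in L\}$, exactly as in the proof of Proposition~\ref{P:10}. By hypothesis the core block of $x$ exists, so Proposition~\ref{P:10} gives $\Gamma(x)=\cap M$. Next, take an arbitrary $K\in\mathbf{C}$ with $x\in K$; then by definition $K\in M$. Since the intersection of a family of sets is contained in each member of that family, we get $\cap M\subseteq K$, i.e.\ $\Gamma(x)\subseteq K$. This is the whole argument.

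There is essentially no obstacle here — the only mild subtlety is the bookkeeping that Proposition~\ref{P:10} is stated as an ``if and only if'' for a \emph{given} block $K$ being \emph{the} core block, whereas here we invoke it in the form ``the core block, when it exists, is the intersection $\cap M$.'' To be careful I would note that $\cap M\in\mathbf{C}$ is not assumed; rather, the existence hypothesis says some block $K_{0}\in\mathbf{C}$ is a core block of $x$, and then the forward direction of Proposition~\ref{P:10} (applied to $K_{0}$) yields $K_{0}=\cap M$, so $\cap M$ is indeed a block of $\mathbf{C}$ and equals $\Gamma(x)$ (uniqueness coming from Proposition~\ref{P:9}). With that identification in hand, the containment $\Gamma(x)\subseteq K$ for every block $K\ni x$ follows as above. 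I would keep the write-up to two or three lines.
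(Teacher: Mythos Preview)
Your proposal is correct and matches the paper's own treatment: the paper simply states that Corollary~\ref{C:11} follows from Proposition~\ref{P:10} without giving a separate proof, and your argument spells out exactly that deduction. Your added care in invoking Proposition~\ref{P:9} to justify writing $\Gamma(x)=\cap M$ is a nice touch that the paper leaves implicit.
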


By Example~\ref{E:12}, we can also see that $K_{2}$ is not a core block of any element of $U$. The following proposition shows the characteristic of this kind of block in a covering.

\begin{proposition}
\label{P:13}
Let $\mathbf{C}$ be a covering of a universe $U$ and $K\in\mathbf{C}$. If $K$ is not a core block of any element of $U$, then $|K|>1$ and for any $x\in K$, $\partial(x)>1$.
\end{proposition}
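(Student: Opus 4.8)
The plan is to prove the contrapositive in two separate parts, showing that if $K$ fails either conclusion then $K$ must be the core block of some element of $U$. Concretely, I will show: (i) if $|K|=1$, say $K=\{x\}$, then $K=\Gamma(x)$; and (ii) if there exists $x\in K$ with $\partial(x)=1$, then $K=\Gamma(x)$. Combining these, the negation of ``$|K|>1$ and for all $x\in K$, $\partial(x)>1$'' forces $K$ to be a core block of some element, which is exactly the contrapositive of the statement.

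For part (i), suppose $K=\{x\}$. Then $x\in K$, and the only $y\in K$ is $x$ itself, so I need $\lambda(x,x)=\partial(x)$; but $\{L\in\mathbf{C}\mid\{x,x\}\subseteq L\}=\{L\in\mathbf{C}\mid x\in L\}$, so indeed $\lambda(x,x)=\partial(x)$, and by Definition~\ref{D:Core block}, $K=\Gamma(x)$. For part (ii), suppose $x\in K$ and $\partial(x)=1$. Then $K$ is the unique block of $\mathbf{C}$ containing $x$, so $\{L\in\mathbf{C}\mid x\in L\}=\{K\}$, and for any $y\in K$ we have $\{x,y\}\subseteq K$, hence $\{L\in\mathbf{C}\mid\{x,y\}\subseteq L\}=\{K\}$ as well (it is a subset of $\{L\in\mathbf{C}\mid x\in L\}=\{K\}$ and contains $K$). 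Therefore $\lambda(x,y)=1=\partial(x)$ for every $y\in K$, so by Definition~\ref{D:Core block}, $K=\Gamma(x)$. Alternatively, part (ii) follows immediately from Proposition~\ref{P:10}: if $\partial(x)=1$ then $K$ is the only block containing $x$, so $K$ trivially equals the intersection of all blocks containing $x$, hence $K=\Gamma(x)$.

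I do not expect any genuine obstacle here; the statement is essentially a direct unwinding of the definition of core block together with the observation that membership repeat degree $1$ makes the ``intersection of all blocks containing $x$'' collapse to a single block. The only mild subtlety is handling the degenerate case $|K|=1$ separately and being careful that $\lambda(x,x)$ is well-defined and equals $\partial(x)$ (since $\{x,x\}=\{x\}$), so that a singleton block is always the core block of its unique element. Once both cases are dispatched, the contrapositive is complete and the proposition follows.

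\begin{proof}
We prove the contrapositive: if $K\in\mathbf{C}$ satisfies $|K|=1$, or there exists $x\in K$ with $\partial(x)=1$, then $K$ is the core block of some element of $U$.

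Case 1: $|K|=1$. Write $K=\{x\}$. Then $x\in K$. The only element of $K$ is $x$, and $\{L\in\mathbf{C}\mid\{x,x\}\subseteq L\}=\{L\in\mathbf{C}\mid x\in L\}$, so $\lambda(x,x)=\partial(x)$. By Definition~\ref{D:Core block}, $K=\Gamma(x)$.

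Case 2: there exists $x\in K$ with $\partial(x)=1$. Then $\{L\in\mathbf{C}\mid x\in L\}$ has exactly one element, and since $x\in K\in\mathbf{C}$, that element is $K$; hence $K$ is the intersection of all blocks of $\mathbf{C}$ that include $x$. By Proposition~\ref{P:10}, $K=\Gamma(x)$.

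In either case $K$ is a core block of some element of $U$, which completes the proof of the contrapositive, and hence of the proposition.
\end{proof}
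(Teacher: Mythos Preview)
Your proof is correct and follows essentially the same approach as the paper's: both argue by contrapositive (the paper phrases it as contradiction) and split into the same two cases. The only cosmetic difference is that the paper invokes Proposition~\ref{P:10} in the singleton case and the definition in the $\partial(x)=1$ case, while you do the reverse; both routes are equally valid.
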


\begin{proof}
Suppose that $|K|=1$, without loss of generality, suppose that $K=\{x\}$. Then $K$ is the intersection of all the blocks of $\mathbf{C}$ that include $x$. By Proposition~\ref{P:10}, we see that $K$ is the core block of element $x$. This is a contradiction to that $K$ is not a core block of any element of $U$.

It is clear that for any $y\in U$, $\partial(y)\geq1$. Suppose that there exists an element of $K$, say $x$, such that $\partial(x)=1$. Then for any $w\in K$, it follows that $\partial(x)=\lambda(x,w)=1$. Thus $K$ is the core block of element $x$. This is a contradiction to that $K$ is not a core block of any element of $U$.

This completes the proof.
\end{proof}

In a covering of a universe, it is possible that none of the whole blocks is a core block. To illustrate this, let us see an example.

\begin{example}
\label{E:14}
Let $U=\{1,2,3\}$, $\mathbf{C}=\{K_{1},K_{2},K_{3}\}$, where $K_{1}=\{1,2\}$, $K_{2}=\{2,3\}$, $K_{3}=\{1,3\}$. Then $K_{1}$, $K_{2}$ and $K_{3}$ are not core blocks of any element of $U$.
\end{example}

There might exist a block in a covering which is not a core block of any element of the universe, and even none of the whole blocks is a core block. When every element of the universe $U$ has its core block in the covering $\mathbf{C}$, is there a block in covering $\mathbf{C}$ which is not a core block of any element of the universe $U$? To solve this issue, we need to introduce the concept of reducible element. Furthermore, based on the concept of reducible element and the concept of invariable covering proposed in the following, we present a necessary and sufficient condition for neighborhoods induced by a covering to be equal to the covering itself.

\section{Condition for neighborhoods induced by a covering to
be equal to the covering itself}
\label{S:Condition for neighborhoods induced by a covering to
be equal to the covering itself}

To solve the issue of under what conditions two coverings generate
the same covering lower approximation or the same covering upper approximation, Zhu and Wang first proposed the
 the concept of reducible element in 2003. In order to obtain a necessary and sufficient condition under which neighborhoods induced by a covering are equal to the covering itself, we also need to use this concept.

\begin{definition}(Reducible element~\cite{ZhuWang03Reduction})
\label{D:15}
Let $\mathbf{C}$ be a covering of a universe $U$ and $K\in \mathbf{C}$. If $K$ is a union of some blocks in $\mathbf{C}-\{K\}$, we say $K$ is a reducible element of $\mathbf{C}$, otherwise $K$ is an irreducible element of $\mathbf{C}$.
\end{definition}

\begin{definition}~\cite{ZhuWang03Reduction}
\label{D:16}
Let $\mathbf{C}$ be a covering of $U$. If every element of $\mathbf{C}$ is an irreducible
element, we say $\mathbf{C}$ is irreducible; otherwise $\mathbf{C}$ is reducible.
\end{definition}

The following two proposition reveal the relationship between reducible element and core block.

\begin{proposition}
\label{P:17}
Reducible element of a covering is not core block.
\end{proposition}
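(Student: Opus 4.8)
The plan is to prove the contrapositive-flavoured statement directly: assume $K \in \mathbf{C}$ is a reducible element and show $K$ cannot be a core block of any $x \in U$. By Definition~\ref{D:15}, being reducible means $K = \cup \mathcal{F}$ for some family $\mathcal{F} \subseteq \mathbf{C} - \{K\}$ of blocks, each of which is a proper subset of $K$ (properness follows since $K \notin \mathcal{F}$ and the union equals $K$; if some member equalled $K$ it would be $K$ itself).

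Next I would fix an arbitrary $x \in U$ and argue $K \neq \Gamma(x)$. If $x \notin K$ there is nothing to do. So suppose $x \in K$. Since $K = \cup \mathcal{F}$, there is some $L \in \mathcal{F}$ with $x \in L$, and $L \subsetneq K$, so pick $y \in K \setminus L$. By Proposition~\ref{P:10}, if $K$ were the core block of $x$ it would have to be the intersection of all blocks of $\mathbf{C}$ containing $x$; but $L$ is such a block and $y \in K \setminus L$, so $y$ does not lie in that intersection, contradicting $y \in K$. Hence $K \neq \Gamma(x)$ for every $x$, which is exactly the claim. (Equivalently, one can phrase it via Definition~\ref{D:Core block}: $K = \Gamma(x)$ requires $\lambda(x,y) = \partial(x)$ for all $y \in K$, i.e.\ every block containing $x$ contains $y$; taking $y \in K \setminus L$ with $x \in L$ breaks this.)

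I expect no serious obstacle here; the only point requiring a moment's care is justifying that the blocks whose union is $K$ are \emph{proper} subsets of $K$ — this is where the hypothesis $\mathcal{F} \subseteq \mathbf{C} - \{K\}$ is used, together with the fact that distinct blocks of a covering need not be comparable, so "$L \subseteq K$ and $L \neq K$" must be extracted from $L \in \mathcal{F}$ and $\cup\mathcal{F} = K$ rather than assumed. Once that is in place, Proposition~\ref{P:10} does the rest, so the write-up is short.

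\begin{proof}
Let $K$ be a reducible element of $\mathbf{C}$. By Definition~\ref{D:15}, there is a subfamily $\mathcal{F}\subseteq\mathbf{C}-\{K\}$ with $K=\cup\mathcal{F}$. Each $L\in\mathcal{F}$ satisfies $L\subseteq K$, and $L\neq K$ since $L\in\mathbf{C}-\{K\}$; hence $L\subsetneq K$. Suppose, for contradiction, that $K$ is the core block of some $x\in U$. Then $x\in K=\cup\mathcal{F}$, so $x\in L$ for some $L\in\mathcal{F}$, and since $L\subsetneq K$ we may choose $y\in K\setminus L$. By Proposition~\ref{P:10}, $K$ equals the intersection of all blocks of $\mathbf{C}$ containing $x$; in particular $K\subseteq L$, since $L$ is one such block. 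This contradicts $y\in K\setminus L$. Therefore $K$ is not a core block of any element of $U$.

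This completes the proof.
\end{proof}
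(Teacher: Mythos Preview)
Your proof is correct and follows essentially the same line as the paper's: both show each member of the union family is a proper subset of $K$, then assume $K=\Gamma(x)$, pick a member containing $x$, and derive $K\subseteq L$ to contradict proper containment. The only cosmetic difference is that the paper invokes Corollary~\ref{C:11} (an immediate consequence of Proposition~\ref{P:10}) for the inclusion $\Gamma(x)\subseteq L$, whereas you appeal to Proposition~\ref{P:10} directly and name an explicit witness $y\in K\setminus L$.
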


\begin{proof}
Let $K$ be a reducible element of covering $\mathbf{C}$ of universe $U$. Then there exists a subset of $\mathbf{C}-\{K\}$, say $L$, such that $K=\cup L$. For any $P\in L$, it is clear that $P$ is a subset of $K$. Furthermore, we say that $P$ is a proper subset of $K$. Otherwise, we have that $P=K$. By $P\in L\subseteq\mathbf{C}-\{K\}$, we have that $K\in\mathbf{C}-\{K\}$. This is impossible.

Suppose $K$ be a core block of some element of $U$, say $x$. Then $x\in K$, thus there exists some $P\in L$, such that $x\in P$. By Corollary~\ref{C:11}, we have that $K\subseteq P$. This is a contradiction to that $P$ is a proper subset of $K$.

This completes the proof.
\end{proof}

The converse of this proposition is not true. From Example~\ref{E:14}, we can see that $K_{1}$, $K_{2}$ and $K_{3}$ are not core blocks of any element of $U$, but neither of them is reducible element. However, we have the following proposition which is related to this converse proposition.

\begin{proposition}
\label{P:18}
Let $\mathbf{C}$ be a covering of a universe $U$. Suppose that for any $x\in U$, there exists the core block of $x$ in covering $\mathbf{C}$ and that there exists $K\in\mathbf{C}$ which is not a core block of any element of $U$, then $K$ is a reducible element of $\mathbf{C}$.
\end{proposition}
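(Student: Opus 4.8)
Proof proposal.

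The plan is to show that $K$ can be written as the union of the core blocks of its own elements, none of which equals $K$. First I would fix $K\in\mathbf{C}$ which is not a core block of any element of $U$, and for each $x\in K$ let $\Gamma(x)$ denote the core block of $x$, which exists by hypothesis and is unique by Proposition~\ref{P:9}. The claim will be that $K=\bigcup_{x\in K}\Gamma(x)$. The inclusion $K\subseteq\bigcup_{x\in K}\Gamma(x)$ is immediate, since each $x\in K$ lies in $\Gamma(x)$. For the reverse inclusion, take any $x\in K$; since $x\in K$ and $K\in\mathbf{C}$, $K$ is one of the blocks of $\mathbf{C}$ containing $x$, so by Corollary~\ref{C:11} (or Proposition~\ref{P:10}, which says $\Gamma(x)$ is the intersection of all blocks containing $x$) we get $\Gamma(x)\subseteq K$. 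Hence $\bigcup_{x\in K}\Gamma(x)\subseteq K$, and equality holds.

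It remains to check that this exhibits $K$ as a union of blocks of $\mathbf{C}-\{K\}$, i.e. that $\Gamma(x)\neq K$ for every $x\in K$. This is exactly where the hypothesis that $K$ is not a core block of any element of $U$ is used: if $\Gamma(x)=K$ for some $x\in K$, then $K$ would be the core block of that $x$, contradicting the assumption. Therefore every $\Gamma(x)$ appearing in the union is a block of $\mathbf{C}$ distinct from $K$, so $K$ is a union of blocks in $\mathbf{C}-\{K\}$, which by Definition~\ref{D:15} means $K$ is a reducible element of $\mathbf{C}$.

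I expect the argument to be short, with no real obstacle; the only point requiring a little care is making sure the family $\{\Gamma(x):x\in K\}$ is genuinely a subfamily of $\mathbf{C}-\{K\}$ rather than possibly reintroducing $K$, which the non-core-block hypothesis rules out, and noting that $K$ is nonempty (as $\emptyset\notin\mathbf{C}$) so the index set $x\in K$ is nonempty and the union is well defined. One could alternatively phrase the whole proof via Proposition~\ref{P:13}, which guarantees $|K|>1$ and $\partial(x)>1$ for all $x\in K$, but the direct union argument above is cleaner and self-contained.
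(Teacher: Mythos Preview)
Your proof is correct and follows essentially the same approach as the paper: write $K=\bigcup_{x\in K}\Gamma(x)$ using $x\in\Gamma(x)$ for one inclusion and Corollary~\ref{C:11} for the other, then use the non-core-block hypothesis to conclude each $\Gamma(x)\neq K$. The only cosmetic difference is that the paper opens by invoking Proposition~\ref{P:13} to record $|K|>1$, whereas you (rightly) observe that mere nonemptiness of $K$ suffices and mention Proposition~\ref{P:13} only as an aside.
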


\begin{proof}
By Proposition~\ref{P:13}, we have that $|K|>1$. Let $K=\{x_{1},x_{2},\cdots,x_{s}\}$, where $s\geq2$. By hypothesis, we see that for any $1\leq i\leq s$, $\Gamma(x_{i})\in\mathbf{C}$ and $\Gamma(x_{i})\neq K$. By Corollary~\ref{C:11}, we have that $\Gamma(x_{i})\subseteq K$, then $\cup_{i=1}^{s}\Gamma(x_{i})\subseteq K$. By $x_{i}\in\Gamma(x_{i})$, we have that $K\subseteq\cup_{i=1}^{s}\Gamma(x_{i})$. Thus $K=\cup_{i=1}^{s}\Gamma(x_{i})$.

This prove that $K$ is a reducible element of $\mathbf{C}$.
\end{proof}

The following example indicates that there exists the case described in Proposition~\ref{P:18}.

\begin{example}
\label{E:19}
Let $U=\{1,2,3\}$, $\mathbf{C}=\{K_{1},K_{2},K_{3},K_{4}\}$, where $K_{1}=\{1\}$, $K_{2}=\{2\}$, $K_{3}=\{3\}$, $K_{4}=\{1,2\}$. Then elements 1, 2 and 3 have their core blocks in covering $\mathbf{C}$, respectively. But $K_{4}$ is not a core block of any element of $U$. And $K_{4}=K_{1}\cup K_{2}$ is a reducible element of $\mathbf{C}$.
\end{example}

When all of the blocks of a covering $\mathbf{C}$ are core blocks, is there an element of the universe $U$ which has no core block in $\mathbf{C}$? The following example indicates that there exists this kind of case.

\begin{example}
\label{E:20}
Let $U=\{1,2,3\}$, $\mathbf{C}=\{K_{1},K_{2}\}$, where $K_{1}=\{1,2\}$, $K_{2}=\{2,3\}$. Then $K_{1}$ is the core block of 1, $K_{2}$ is the core block of 3. But element 2 has no core block in $\mathbf{C}$.
\end{example}

Based on the above conclusions, we propose the following concept.

\begin{definition}(Invariable covering)
\label{D:21}
Let $\mathbf{C}$ be a covering of a universe $U$. $\mathbf{C}$ is called an invariable covering if and only if $\mathbf{C}$ is irreducible and for any $x\in U$, there exists the core block of $x$.
\end{definition}

Invariable covering has the following property.

\begin{proposition}
\label{P:22}
Let $U$ be a universe. $\mathbf{C}$ is an invariable covering of $U$ if and only if for any $x\in U$, there exists the core block of $x$ and for any $K\in\mathbf{C}$, $K$ is the core block of some elements of $U$.
\end{proposition}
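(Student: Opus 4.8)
The plan is to exploit the fact that the clause ``for any $x\in U$, there exists the core block of $x$'' appears on both sides of the biconditional, so the real content is the equivalence, \emph{under that standing hypothesis}, between ``$\mathbf{C}$ is irreducible'' (the remaining requirement in Definition~\ref{D:21}) and ``every $K\in\mathbf{C}$ is the core block of some element of $U$''. Both implications will then follow from the two bridging results already established: Proposition~\ref{P:17} (a reducible element is never a core block) and Proposition~\ref{P:18} (if every element of $U$ has a core block and some $K\in\mathbf{C}$ is the core block of no element, then $K$ is reducible).

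For the forward direction I would assume $\mathbf{C}$ is an invariable covering. By Definition~\ref{D:21} this at once gives the ``core block of $x$'' clause, so only the second clause needs work. Arguing by contradiction, suppose some $K\in\mathbf{C}$ is the core block of no element of $U$. Since, again by Definition~\ref{D:21}, every element of $U$ has a core block, Proposition~\ref{P:18} applies and yields that $K$ is a reducible element of $\mathbf{C}$, contradicting the irreducibility of $\mathbf{C}$. Hence every block of $\mathbf{C}$ is a core block of some element.

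For the converse I would assume that every $x\in U$ has a core block and that every $K\in\mathbf{C}$ is the core block of some element. The first assumption is precisely one of the two requirements in Definition~\ref{D:21}, so it remains only to show $\mathbf{C}$ is irreducible. Again by contradiction, if some $K\in\mathbf{C}$ were reducible, then Proposition~\ref{P:17} would force $K$ not to be a core block of any element of $U$, contradicting the second assumption. Therefore $\mathbf{C}$ is irreducible, and combined with the core-block hypothesis this makes $\mathbf{C}$ invariable.

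I do not anticipate a genuine obstacle; the proof is essentially a bookkeeping argument once Propositions~\ref{P:17} and~\ref{P:18} are in hand. The only point requiring a little care is not double-counting the shared clause: one must check that each invocation of Proposition~\ref{P:18} is legitimate, i.e., that its hypothesis ``every element of $U$ has a core block'' is available exactly when used (it is, coming from Definition~\ref{D:21} in the forward direction and from the assumed hypothesis in the converse).
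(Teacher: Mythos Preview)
Your proof is correct and follows essentially the same approach as the paper: both directions are handled by contradiction, invoking Proposition~\ref{P:18} to show (under the invariable-covering hypothesis) that every block is a core block, and Proposition~\ref{P:17} to show (under the right-hand hypotheses) that $\mathbf{C}$ is irreducible. Your additional remark isolating the shared clause and reducing the content to an equivalence between irreducibility and ``every block is a core block'' makes the structure even clearer than the paper's presentation.
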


\begin{proof}
$(\Leftarrow)$: By the definition of invariable covering, we only need to prove that $\mathbf{C}$ is irreducible. We use an indirect proof. Suppose $\mathbf{C}$ be reducible. Then there exists at least one reducible element, say $K$, in covering $\mathbf{C}$. By Proposition~\ref{P:17}, we see that $K$ is not a core block of any element of $U$. This is a contradiction to the hypothesis.

$(\Rightarrow)$: Let $\mathbf{C}$ be an invariable covering of $U$. Then for any $x\in U$, there exists the core block of $x$. We only need to prove that for any $K\in\mathbf{C}$, $K$ is a core block of some elements of $U$. We use an indirect proof. Suppose that there exists some block of $\mathbf{C}$, say $K$, which is not a core block of any element of $U$. By Proposition~\ref{P:18}, we see that $K$ is a reducible element of $\mathbf{C}$. This is a contradiction to that $\mathbf{C}$ is irreducible.

This completes the proof.
\end{proof}

Proposition~\ref{P:22} can be considered as another definition of invariable covering.
Now, we present one of the main results in this paper. From this theorem, we will see that invariable covering is the only kind of covering which is equal to the neighborhoods induced by it.

\begin{theorem}
\label{T:23}
$Cov(\mathbf{C})=\mathbf{C}$ if and only if $\mathbf{C}$ is an invariable covering.
\end{theorem}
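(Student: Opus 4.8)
The plan is to prove the two directions separately, relying on the characterization of invariable covering given in Proposition~\ref{P:22} (every element has a core block and every block is a core block of some element) and on Proposition~\ref{P:10} (a block is the core block of $x$ iff it equals the intersection of all blocks containing $x$), together with the observation that $N(x)=\cap\{K\in\mathbf{C}\mid x\in K\}$ is exactly that intersection. So $N(x)$ always equals $\Gamma(x)$ whenever the core block of $x$ exists, and more generally $N(x)$ is the ``would-be'' core block of $x$ even when it is not itself a member of $\mathbf{C}$.

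First I would prove the direction ($\Leftarrow$): assume $\mathbf{C}$ is an invariable covering. By Proposition~\ref{P:22}, for each $x\in U$ the core block $\Gamma(x)$ exists, and by Proposition~\ref{P:10} it equals $\cap\{K\in\mathbf{C}\mid x\in K\}=N(x)$; hence every $N(x)\in\mathbf{C}$, so $Cov(\mathbf{C})\subseteq\mathbf{C}$. Conversely, take any $K\in\mathbf{C}$; again by Proposition~\ref{P:22}, $K=\Gamma(x)$ for some $x\in U$, and then $K=N(x)\in Cov(\mathbf{C})$, so $\mathbf{C}\subseteq Cov(\mathbf{C})$. Combining, $Cov(\mathbf{C})=\mathbf{C}$.

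For the direction ($\Rightarrow$), assume $Cov(\mathbf{C})=\mathbf{C}$. I need to show (i) every $x\in U$ has a core block, and (ii) $\mathbf{C}$ is irreducible; by Definition~\ref{D:21} this is exactly what ``invariable covering'' means. For (i): fix $x\in U$. Since $N(x)=\cap\{K\in\mathbf{C}\mid x\in K\}$ and, by assumption, $N(x)\in Cov(\mathbf{C})=\mathbf{C}$, the block $N(x)$ is itself a member of $\mathbf{C}$ equal to the intersection of all blocks containing $x$; by Proposition~\ref{P:10} this makes $N(x)=\Gamma(x)$, so the core block of $x$ exists. For (ii), irreducibility: suppose toward a contradiction that some $K\in\mathbf{C}$ is reducible, say $K=\cup L$ with $L\subseteq\mathbf{C}-\{K\}$. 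Since by part (i) every element has a core block, Proposition~\ref{P:18} tells us the only blocks that can be reducible are those that are not core blocks of any element — more directly, Proposition~\ref{P:17} says a reducible element is never a core block. But then $K\in\mathbf{C}=Cov(\mathbf{C})$ means $K=N(y)$ for some $y$, and by Proposition~\ref{P:0} $N(y)$ is not a union of other blocks of $Cov(\mathbf{C})=\mathbf{C}$ — contradicting $K=\cup L$ (after checking each member of $L$ is a proper subset of $K$, exactly as in the proof of Proposition~\ref{P:17}, so the union is genuinely over ``other'' blocks). Hence $\mathbf{C}$ is irreducible, and with (i) we conclude $\mathbf{C}$ is an invariable covering.

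The main obstacle I anticipate is the bookkeeping in part (ii) of the ($\Rightarrow$) direction: I must make sure the contradiction is set up against a correctly stated property. Proposition~\ref{P:0} forbids $N(y)$ from being a union of \emph{other} blocks of $Cov(\mathbf{C})$, so I need the members of the witnessing family $L$ to be distinct from $K$ and to actually lie in $Cov(\mathbf{C})=\mathbf{C}$, which holds since $L\subseteq\mathbf{C}-\{K\}$; the only subtlety is ruling out $K$ itself sneaking into a union (handled by the proper-subset argument). Alternatively, and perhaps more cleanly, I can bypass Proposition~\ref{P:0} entirely and argue purely via core blocks: (i) gives that every element has a core block, so if $\mathbf{C}$ were reducible, Proposition~\ref{P:17} would produce a block $K\in\mathbf{C}$ that is not the core block of any element, yet $K=N(y)=\Gamma(y)$ from the argument in (i) forces $K$ to be a core block — contradiction. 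I would present whichever of these two phrasings is shorter, likely the second.
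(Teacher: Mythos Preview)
Your proof is correct and essentially matches the paper's. The $(\Leftarrow)$ direction is identical. For $(\Rightarrow)$, the paper verifies the two conditions of Proposition~\ref{P:22} directly (every block of $\mathbf{C}$ is a core block of some element, and every element has a core block), then invokes Proposition~\ref{P:22}; you instead verify the two conditions of Definition~\ref{D:21} (every element has a core block, and $\mathbf{C}$ is irreducible). Your ``cleaner'' second phrasing for irreducibility---a reducible block cannot be a core block by Proposition~\ref{P:17}, yet $K=N(y)=\Gamma(y)$ forces it to be one---is exactly the paper's argument that every block is a core block, just stated contrapositively; so the two routes coincide up to which characterization of invariable covering you finish with.
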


\begin{proof}
$(\Leftarrow)$: Let $\mathbf{C}$ be an invariable covering of $U$. For any $L\in\mathbf{C}$, by Proposition~\ref{P:22}, there exists some element of $U$, say $x$, such that $L=\Gamma(x)$. By Proposition~\ref{P:10}, we have that $\Gamma(x)=\cap\{K\in\mathbf{C}|x\in K\}=N(x)\in Cov(\mathbf{C})$. Then $L\in Cov(\mathbf{C})$. Thus $\mathbf{C}\subseteq Cov(\mathbf{C})$. Conversely, for any $M\in Cov(\mathbf{C})$, we see that there exists some element of $U$, say $y$, such that $M=N(y)=\cap\{K\in\mathbf{C}|y\in K\}$. Since there exists the core block of $y$ in $\mathbf{C}$, by Proposition~\ref{P:10}, we have that $\Gamma(y)=\cap\{K\in\mathbf{C}|y\in K\}$. Then $M=\Gamma(y)\in\mathbf{C}$. Thus $Cov(\mathbf{C})\subseteq\mathbf{C}$. Hence $Cov(\mathbf{C})=\mathbf{C}$.

$(\Rightarrow)$: Let $Cov(\mathbf{C})=\mathbf{C}$. Then $\mathbf{C}\subseteq Cov(\mathbf{C})$ and $Cov(\mathbf{C})\subseteq\mathbf{C}$. On the one hand, for any $L\in\mathbf{C}$, that $L\in Cov(\mathbf{C})$ holds. So there exists some element of $U$, say $x$, such that $L=N(x)=\cap\{K\in\mathbf{C}|x\in K\}$. By Proposition~\ref{P:10}, we have that $L=\Gamma(x)$. This indicates that all the blocks of $\mathbf{C}$ are core blocks. On the other hand, for any $y\in U$, that $N(y)\in Cov(\mathbf{C})$ holds. Thus $N(y)\in\mathbf{C}$. By Proposition~\ref{P:10} and $N(y)=\cap\{K\in\mathbf{C}|y\in K\}$, we have that $N(y)=\Gamma(y)$. Then $\Gamma(y)\in\mathbf{C}$. This indicates that every element of $U$ has its core block. By Proposition~\ref{P:22}, $\mathbf{C}$ is an invariable covering.

This completes the proof.
\end{proof}

\section{Condition for covering to be a neighborhoods}
\label{S:Condition for covering to be a neighborhoods}
Giving any covering $\mathbf{C}$ of a universe $U$, it is easy to calculate the neighborhoods out. But conversely, giving any covering $\mathbf{D}$ of the universe $U$,  it is not clear whether or not there exists a covering of the universe $U$, say $\mathbf{C}$, such that $D=Cov(\mathbf{C})$. Certainly, by the concept of $Cov(\mathbf{C})$ and some its properties, we know that if the amount of the blocks of covering $\mathbf{D}$ is more than the amount of the elements of universe $U$, or there exists some block of $\mathbf{D}$ which is a union of some other blocks of $\mathbf{D}$, namely, $\mathbf{D}$ is reducible, $\mathbf{D}$ must not be neighborhoods of any covering of universe $U$. But if a covering $\mathbf{D}$ does not belong to the cases as above mentioned, is it certainly a neighborhoods of some covering of universe $U$? To solve this issue, we firstly prove the following proposition about $Cov(\mathbf{C})$.

\begin{theorem}
\label{T:24}
For any covering $\mathbf{C}$ of universe $U$, it holds that $Cov(Cov(\mathbf{C}))=Cov(\mathbf{C})$.
\end{theorem}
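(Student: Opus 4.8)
The plan is to deduce the identity from Theorem~\ref{T:23}: if I can show that the covering $\mathbf{D}:=Cov(\mathbf{C})$ is itself an invariable covering, then applying Theorem~\ref{T:23} with $\mathbf{D}$ in place of $\mathbf{C}$ yields $Cov(Cov(\mathbf{C}))=Cov(\mathbf{D})=\mathbf{D}=Cov(\mathbf{C})$ at once. So the whole task reduces to checking, for $\mathbf{D}$, the two defining conditions of Definition~\ref{D:21}: that $\mathbf{D}$ is irreducible, and that every $x\in U$ has a core block in $\mathbf{D}$. (That $\mathbf{D}$ is a covering of $U$ was already observed in Section~\ref{S:Preliminaries}.)

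Irreducibility is immediate: Proposition~\ref{P:0} says that no block of $Cov(\mathbf{C})$ is a union of other blocks of $Cov(\mathbf{C})$, which is precisely the statement, via Definitions~\ref{D:15} and~\ref{D:16}, that $\mathbf{D}$ is irreducible.

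For the core blocks I would invoke Proposition~\ref{P:10}, which characterizes the core block of $x$ (with respect to $\mathbf{D}$) as the intersection of all blocks of $\mathbf{D}$ containing $x$; this spares us from recomputing $\partial$ and $\lambda$ relative to $\mathbf{D}$. The point is that $N(x)$ is the $\subseteq$-least block of $\mathbf{D}$ through $x$. Indeed $x\in N(x)$, since $x$ lies in every $K\in\mathbf{C}$ with $x\in K$ and hence in their intersection; and for any $N(z)\in\mathbf{D}$ with $x\in N(z)$, Proposition~\ref{P:1} gives $N(x)\subseteq N(z)$. Therefore $N(x)\subseteq\cap\{N(z)\in\mathbf{D}\,|\,x\in N(z)\}\subseteq N(x)$, so $N(x)=\cap\{N(z)\in\mathbf{D}\,|\,x\in N(z)\}$, and Proposition~\ref{P:10} identifies $N(x)$ as the core block of $x$ in $\mathbf{D}$. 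Hence every element of $U$ has a core block in $\mathbf{D}$, so $\mathbf{D}$ is an invariable covering and the theorem follows from Theorem~\ref{T:23}.

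I do not expect a genuine obstacle here; the one thing to be careful about is keeping track of which covering each notion refers to, since ``neighborhood'' and $N(x)$ are computed with respect to $\mathbf{C}$ while ``core block'' and ``irreducible'' in the argument above are taken with respect to $\mathbf{D}=Cov(\mathbf{C})$ --- Proposition~\ref{P:10} is exactly the bridge, as it describes core blocks purely through intersections of blocks. If one wished to avoid Theorem~\ref{T:23} altogether, the same two ingredients (minimality of $N(x)$ among the blocks of $Cov(\mathbf{C})$ through $x$, and Proposition~\ref{P:0}) give a direct double-inclusion proof of $Cov(Cov(\mathbf{C}))=Cov(\mathbf{C})$, but the route through Theorem~\ref{T:23} is shorter.
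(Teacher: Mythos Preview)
Your proposal is correct and matches the paper's own ``method one'' almost line for line: reduce to Theorem~\ref{T:23} by showing $Cov(\mathbf{C})$ is invariable, citing Proposition~\ref{P:0} for irreducibility and using Proposition~\ref{P:1} together with Proposition~\ref{P:10} to exhibit $N(x)$ as the core block of $x$ in $Cov(\mathbf{C})$. The direct double-inclusion alternative you mention at the end is precisely the paper's ``method two''.
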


\begin{proof}
We provide two proofs for this proposition.

The method one. By Theorem~\ref{T:23}, we only need to prove that $Cov(\mathbf{C})$ is an invariable covering. By Proposition~\ref{P:0}, we see that $\mathbf{C}$ is irreducible. For any $x\in U$, it is clear that $x\in N(x)$. And $\forall N(w)(N(w)\in Cov(\mathbf{C})\wedge x\in N(w)\rightarrow N(x)\subseteq N(w))$. This means that $N(x)$ is the intersection of all the blocks of $Cov(\mathbf{C})$ that include $x$. By Proposition~\ref{P:10}, we know that $N(x)$ is the core block of $x$. Thus $Cov(\mathbf{C})$ is an invariable covering. Hence $Cov(Cov(\mathbf{C}))=Cov(\mathbf{C})$.

The method two. Let $Cov(\mathbf{C})=\{N(x_{1}),N(x_{2}),\cdots,N(x_{m})\}$ and $Cov(Cov(\mathbf{C}))\\
=\{N^{\prime}(x_{1}),N^{\prime}(x_{2}),\cdots,N^{\prime}(x_{m})\}$. For any $1\leq i,j\leq m$, it is clear that $x_{i}\in N(x_{i})$. And if $x_{i}\in N(x_{j})$, we have that $N(x_{i})\subseteq N(x_{j})$. Thus $N^{\prime}(x_{i})=\cap\{N(x_{j})\in Cov(\mathbf{C})|x\in N(x_{j})\}=N(x_{i})$. Hence $Cov(Cov(\mathbf{C}))=Cov(\mathbf{C})$.

This completes the proof.
\end{proof}

This proposition is found and proved by ourselves independently. Afterward, we found that it is had been proved by Liu et al.~\cite{LiuSai09AComparison}. By this proposition, we have the following theorem.

\begin{theorem}
\label{T:25}
A covering $\mathbf{D}$ of universe $U$ is a neighborhoods of some covering of $U$ if and only if $Cov(\mathbf{D})=\mathbf{D}$.
\end{theorem}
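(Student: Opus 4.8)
\textbf{Proof proposal for Theorem~\ref{T:25}.}
The plan is to prove both directions by reducing everything to Theorem~\ref{T:23} and Theorem~\ref{T:24}. The statement is that a covering $\mathbf{D}$ of $U$ equals $Cov(\mathbf{C})$ for some covering $\mathbf{C}$ of $U$ if and only if $Cov(\mathbf{D})=\mathbf{D}$. Both implications turn out to be short once the two earlier theorems are in hand, so the real content of the proof is bookkeeping about what "is a neighborhoods of some covering" means.

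First I would do the easy direction $(\Leftarrow)$: suppose $Cov(\mathbf{D})=\mathbf{D}$. Then $\mathbf{D}$ is literally exhibited as the neighborhoods induced by a covering, namely by $\mathbf{D}$ itself (recall $\mathbf{D}$ is a covering of $U$ by hypothesis), so we may take $\mathbf{C}=\mathbf{D}$ and we are done. This needs no machinery at all.

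Next I would do $(\Rightarrow)$: suppose there exists a covering $\mathbf{C}$ of $U$ with $\mathbf{D}=Cov(\mathbf{C})$. Applying $Cov(\cdot)$ to both sides gives $Cov(\mathbf{D})=Cov(Cov(\mathbf{C}))$, and by Theorem~\ref{T:24} the right-hand side equals $Cov(\mathbf{C})$, which is $\mathbf{D}$. Hence $Cov(\mathbf{D})=\mathbf{D}$. One small point I would check explicitly is that $\mathbf{D}=Cov(\mathbf{C})$ is indeed a covering of $U$ so that $Cov(\mathbf{D})$ is even defined — but this was already noted in Section~\ref{S:Preliminaries} right after Definition~\ref{D:2}, so it can simply be cited. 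Alternatively, one could route this direction through Theorem~\ref{T:23} by observing that $Cov(\mathbf{C})$ is an invariable covering (which is exactly what "method one" of the proof of Theorem~\ref{T:24} establishes) and then invoking Theorem~\ref{T:23}; I would probably present the one-line argument via Theorem~\ref{T:24} since it is the more direct citation.

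I do not expect a genuine obstacle here; the theorem is essentially a repackaging of the idempotence result Theorem~\ref{T:24} together with the fixed-point characterization Theorem~\ref{T:23}. The only thing to be careful about is the quantifier structure of the claim — "neighborhoods of \emph{some} covering" is an existential statement, and the $(\Leftarrow)$ direction is proved by supplying the witness $\mathbf{C}=\mathbf{D}$, which is the kind of move that is trivial once spotted but easy to overcomplicate. I would keep the write-up to a few lines for each direction and lean entirely on the two preceding theorems.
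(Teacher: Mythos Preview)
Your proposal is correct and matches the paper's own proof essentially line for line: the paper proves $(\Leftarrow)$ by taking $\mathbf{C}=\mathbf{D}$ and proves $(\Rightarrow)$ by applying $Cov(\cdot)$ and invoking Theorem~\ref{T:24}, exactly as you suggest. Your extra remarks about Theorem~\ref{T:23} and about $Cov(\mathbf{C})$ being a covering are accurate side comments but are not needed for the argument.
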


\begin{proof}
$(\Leftarrow)$: If $Cov(\mathbf{D})=\mathbf{D}$, then $\mathbf{D}$ is the neighborhoods of covering $\mathbf{D}$.

$(\Rightarrow)$: Suppose $\mathbf{D}$ be a neighborhoods of some covering of $U$, say $\mathbf{C}$, i.e. $Cov(\mathbf{C})=\mathbf{D}$. By Theorem~\ref{T:24}, we have that $Cov(\mathbf{D})=Cov(Cov(\mathbf{C}))=Cov(\mathbf{C})=\mathbf{D}$.

This completes the proof.
\end{proof}

Of course, different coverings of universe $U$ can induce the same neighborhoods.

\section{Conclusions }
\label{S:Conclusions}
Neighborhood is an important concept in covering based rough sets. Through some concepts based on neighborhood and neighborhoods such as consistent function, we may find new connections between covering based rough sets and information systems. So it is necessary to study the properties of neighborhood and neighborhoods themselves. In this paper, we mainly studied on two issues induced by neighborhood and neighborhoods. The one is that under what condition neighborhoods induced by a covering is equal to the covering itself. The other one is that given an arbitrary covering, whether or not there exists another covering such that the neighborhoods induced by it is just the former covering. Through the study on the two fundamental issues, we have gained a deeper understanding of the relationship between neighborhoods and the covering which induce the neighborhoods. There are still many issues induced by neighborhood and neighborhoods to solve. We will continually focus on them in our following research.

\section*{Acknowledgments}
This work is supported in part by the National Natural Science Foundation of China under Grant No. 61170128, the Natural Science Foundation of Fujian Province, China, under Grant Nos. 2011J01374 and 2012J01294, and the Science and Technology Key Project of Fujian Province, China, under Grant No. 2012H0043.


\end{document}